\documentclass[runningheads]{llncs}
\usepackage[T1]{fontenc}
\usepackage{graphicx}
\usepackage{times} 
\usepackage{soul}
\usepackage{url}
\usepackage[hidelinks]{hyperref}
\usepackage[utf8]{inputenc}
\usepackage[small]{caption}
\usepackage{graphicx}
\usepackage{amsmath}
\usepackage{booktabs}
\usepackage[switch]{lineno} 
\usepackage[algoruled,ruled,vlined]{algorithm2e}
\usepackage{tikz}

\usepackage{subcaption}
\usepackage{graphicx}

\usepackage{tikz}
\usetikzlibrary{positioning}

\newcommand{\bcp}{{BCP}\xspace}
\newcommand{\ps}{{${PS}$}\xspace}
\newcommand{\propps}{{$PROP_{PS}$}\xspace}
\newcommand{\bpluse}{{\bf B+E}\xspace}
\newcommand{\dfour}{{\bf d4}\xspace}
\newcommand{\decdnnf}{{\bf decdnnf\_rs}\xspace}
\newcommand{\ctwod}{{\bf C2D}\xspace}
\newcommand{\dSharp}{{\bf Dsharp}\xspace}
\newcommand{\sharpSAT}{{\bf SharpSAT-TD}\xspace}
\newcommand{\sat}{SAT\xspace}
\newcommand{\cnf}{CNF\xspace}
\newcommand{\dDNNF}{d-DNNF\xspace}

\newcommand{\var}[1]{Var (#1)\xspace}
\newcommand{\model}[1]{Mod (#1)\xspace}
\newcommand{\minI}[1]{{#1}.min\xspace}
\newcommand{\maxI}[1]{{#1}.max\xspace}
\newcommand{\pr}[1]{Pr[{#1}]\xspace}
\newcommand{\arr}[1]{[{#1}]\xspace}

\begin{document}
\title{Enhancing Query Efficiency for d-DNNF Representations Through Preprocessing}
\author{Jean Marie Lagniez\orcidID{0000-0002-6557-4115} \and Emmanuel Lonca\orcidID{0000-0002-9502-2821}}
\authorrunning{Lagniez and Lonca}
\institute{CRIL, U. Artois \& CNRS, F-62300 Lens, France\\
\email{\{lagniez,lonca\}@cril.fr}
}

\maketitle              % typeset the header of the contribution
\begin{abstract}
In this paper, we investigate preprocessing techniques aimed at improving the efficiency of accessing models of propositional formulas represented in conjunctive normal form (CNF). 
We focus on three fundamental tasks: uniform sampling, direct model access, and model enumeration. Our analysis reveals that most state-of-the-art preprocessors, when they do not preserve formula equivalence, are generally unsuitable for these tasks. 
In contrast, we demonstrate that preprocessors which preserve model counts can be effectively leveraged, provided relevant preprocessing information is maintained. 
To validate our approach, we perform extensive experiments on a diverse suite of benchmarks from multiple domains. 
The experimental results show that our preprocessing methods are both efficient and robust, yielding significant performance improvements for model access queries when CNF formulas are compiled into \dDNNF{} representations.

\keywords{
    Preprocessing techniques \and
    Uniform sampling\and
    Direct access\and
    Model enumeration\and
    Decision-DNNF}
\end{abstract}

\section{Introduction}
Propositional logic forms the backbone of a wide array of fields, including databases~\cite{AbiteboulHV95}, automated planning~\cite{FikesN71}, and explainable artificial intelligence~\cite{Darwiche23}, among others.
When problems from these domains are encoded as propositional formulas, efficiently querying these formulas becomes essential for extracting meaningful insights and solving practical tasks.
Crucial queries in this context include \emph{model counting}, \emph{direct access}, \emph{uniform sampling}, and \emph{model enumeration}.
However, the computational complexity of these queries, often \#P-complete, poses significant challenges, making the choice of suitable formula representations highly influential to overall performance.
In this work, we focus specifically on formulas expressed in \emph{conjunctive normal form} (CNF), the predominant representation in many practical settings due to its compatibility with modern SAT solvers and widespread use in real-world applications.

To overcome the inherent computational difficulties associated with these queries on CNF formulas, \emph{preprocessing} has emerged as a vital strategy.
Preprocessing involves transforming a CNF formula into an equivalent (or, under certain relaxations, satisfiability-equivalent) CNF representation that is better suited for efficient query evaluation.
Such transformations are valuable if they facilitate downstream tasks, even after accounting for the preprocessing cost itself.
Indeed, preprocessing has proven effective across a variety of reasoning tasks, such as \sat{} solving~\cite{BiereJK21}, model counting~\cite{LagniezM17b,SoosM24}, and almost-uniform sampling~\cite{SoosM22}, often yielding significant runtime improvements.

This paper provides a comprehensive analysis of elementary preprocessing techniques, with a particular focus on their impact and suitability for direct access, uniform sampling, and model enumeration tasks.
The techniques considered include \emph{vivification}, \emph{occurrence reduction}, \emph{backbone identification}, \emph{variable elimination}, \emph{blocked clause elimination}, and the removal of \emph{implicitly} and \emph{explicitly defined variables}~\cite{LagniezM17b,LagniezLM20}.
Notably, the first three techniques preserve full logical equivalence, making them broadly applicable regardless of the specific query.
In contrast, variable elimination and blocked clause elimination only preserve satisfiability, limiting their utility for our target queries.
Finally, under certain conditions, eliminating implicitly or explicitly defined variables (while preserving the model count) can enable more efficient solutions for direct access, uniform sampling, and model enumeration.

To empirically evaluate the practical benefits of preprocessing techniques, we adopt a pipeline centered on compiling propositional formulas into \dDNNF. 
Specifically, we first apply preprocessing to the CNF formula, then compile the resulting CNF into a \dDNNF{} representation, upon which queries are subsequently answered.
This workflow reflects the common strategy for tackling the considered queries, as many inference and enumeration tasks can be performed efficiently on \dDNNF{} representations. 
Our experiments leverage a diverse suite of benchmarks, utilizing the state-of-the-art \dDNNF{} compiler \dfour~\cite{LagniezM17} in conjunction with preprocessing capabilities from \bpluse~\cite{LagniezM23}. 
We systematically assess the impact of different preprocessing strategies, ranging from no preprocessing to advanced techniques that remove explicitly defined variables, on overall query runtime and efficiency. 
Through this study, we aim to provide practitioners with clearer insight into the trade-offs between preprocessing overhead and query performance, and to illustrate how effective preprocessing can substantially enhance the practical utility of logical reasoning tools for challenging inference and enumeration tasks.

The remainder of this paper is organized as follows.
Section~\ref{sec:preliminaries} presents the necessary formal preliminaries.
Section~\ref{sec:preprocessing} introduces and analyzes the considered preprocessing techniques, focusing on their applicability to direct access, uniform sampling, and model enumeration.
Section~\ref{sec:experiments} reports our experimental results and evaluates the effectiveness of these techniques.
Finally, Section~\ref{sec:conclusion} concludes the paper and outlines promising directions for future work.

\section{Formal Preliminaries\label{sec:preliminaries}}
We consider a propositional language \propps{} in the standard manner, derived from a finite set \ps{} of propositional symbols and the 
standard logical connectives ($\land$, $\lor$, $\leftarrow$, $\leftrightarrow$, $\neg$).
\propps{} is interpreted classically.
For any formula $\Sigma$ in \propps{}, $\mathit{Var}(\Sigma)$ denotes the set of propositional variables present in $\Sigma$.
Given a finite set of variables $X$, ${\{0, 1\}}^X$ represents the set of all possible Boolean assignments to the variables in $X$.
Each propositional formula $\Sigma$ in \propps{} defines a Boolean function over $\var{\Sigma}$, 
mapping $\Sigma$ from ${\{0, 1\}}^{|\var{\Sigma}|}$ to $\{0, 1\}$. 
Assignments to $\var{\Sigma}$ that evaluate to 1 under $\Sigma$ are termed \emph{satisfying assignments} or \emph{models} of $\Sigma$.
$\model{\Sigma}$ represents the set of all models of $\Sigma$. 
Two formulas $\Sigma_1$ and $\Sigma_2$ are considered equivalent if their sets of models are identical, that is, if $\model{\Sigma_1} = \model{\Sigma_2}$. 
This equivalence is denoted as $\Sigma_1 \equiv \Sigma_2$.
$\Sigma_1$ implies $\Sigma_2$, denoted as $\Sigma_1 \models \Sigma_2$, if $\model{\Sigma_1} = \model{\Sigma_2}$.
$\bot$ represents the formula which is always falsified and $\top$ the formula which is always satisfied.

A \emph{literal} is defined as either a Boolean variable or its negation.
For any literal $\ell$, $\mathit{Var}(\ell)$ represents the variable $x$ of $\ell$ ($\mathit{Var}(x) = x$ and $\mathit{Var}(\neg x) = x$), 
and ${\sim}\ell$ denotes the complementary literal of $\ell$. In other words, for every variable $x$, ${\sim}x = \neg x$ and ${\sim}\neg x = x$. 
The conditioning of a formula $\Sigma$ by a literal $\ell = x$ (resp. $\ell = \neg x$) results in the formula $\Sigma[\ell]$, 
where each occurrence of $x$ (resp. $\neg x$) in $\Sigma$ is replaced by $\top$, and each occurrence of $\neg x$ (resp. $x$) is replaced 
by $\bot$.
After such replacement, simplification is carried out using the semantics of the logical connectors (e.g., $\top \vee \Gamma = \top$, $\top \land \Gamma = \Gamma$,
etc.) until a fixed point is reached.
This notion can be extended to set of literals $S = \{\ell_1, \ldots, \ell_m\}$ in the following way $\Sigma[S] = ((\Sigma[\ell_1]) \ldots)[\ell_m]$.
Each assignment $\mu$ is conceptualized as a (conjunctively interpreted) set of literals. 
We differentiate between total assignments and partial assignments based on whether all variables are assigned truth values or not, respectively.

A {CNF} formula $\Sigma$ is a conjunction of  clauses, where a clause is a disjunction of literals.  
Every {CNF}  is viewed  as a set  of clauses,  and every clause is  viewed as a  set of literals. 

\begin{example}\label{ex:running}
Let $\Psi = \{a \vee b, \neg a \vee \neg b, c \vee d \vee a, c \vee d \vee b\}$ a {CNF} formula and $\mathit{Var}(\Psi) = \{a,b,c,d\}$.
\end{example}

\subsection{Knowledge Compilation and d-DNNF representation}

A d-DNNF formula, which stands for Deterministic Decomposable Negation Normal Form, is a Boolean circuit with a single output, serving as its root.
It can be conceptualized as a rooted Directed Acyclic Graph (DAG), denoted as $\langle V,E \rangle$, where each input is either a literal 
or a Boolean constant ($\bot$ or $\top$), and each internal gate is either a decomposable $\land$ gate or a deterministic $\vee$ gate.
In a decomposable gate $N = \wedge(N_1, \ldots, N_k)$, no common variable is shared between the sub-circuits rooted at $N_i$ 
and $N_j$ for all $i \neq j$. 
In a deterministic gate $N = \vee(N_1, \ldots, N_k)$, the sub-circuits rooted at $N_i$ and $N_j$ are jointly inconsistent for
all $i \neq j$.
The size of a \dDNNF{} $\Sigma = \langle V,E \rangle$, denoted by $|\Sigma|$ is its number of edges $|E|$. 
\dDNNF{} is universal, as it can accommodate every propositional theory~\cite{Darwiche02a}.

Knowledge compilers such as \ctwod~\cite{Darwiche04}, \dSharp~\cite{MuiseMBH12}, 
\dfour~\cite{LagniezM17}, and \sharpSAT~\cite{KieselE23} are not able to produce \dDNNF but they are able of producing 
a sub-class of \dDNNF{} which is decision-DNNF (decision Decomposable Negation Normal Form) representations.
decision-DNNF  is defined similarly, but with decision gates of the form 
$N = \mathit{ite}(x, N_1, N_2)$ replacing deterministic $\vee$ gates. 
Here, $x$ is the decision variable at gate $N$, absent in the sub-circuits $N_1$ or $N_2$, and $\mathit{ite}$ is a ternary connective 
denoting ``{\tt if} \ldots {\tt then} \ldots {\tt else} \ldots''.
decision-DNNF representations, also termed decomposable decision graphs~\cite{FargierM06}, can be converted into specific d-DNNF representations in linear time. 
By replacing a decision node of the form $N = \mathit{ite}(x, N_1, N_2)$ in a decision-DNNF representation with 
$N = (\neg x \wedge N_1) \vee (x \wedge N_2)$, the resulting d-DNNF representation maintains decomposable $\land$ nodes 
(as $x$ appears neither in $N_1$ nor in $N_2$) and a deterministic $\vee$ node (since $(\neg x \wedge N_1) \wedge (x \wedge N_2)$ is 
inconsistent).
For simplicity of exposition, we will use the term \dDNNF{} throughout the remainder of this paper, although our experiments are carried out using decision-DNNF representations. Importantly, all our results apply equally to both \dDNNF{} and decision-DNNF formats.

\begin{example}[Example~\ref{ex:running} cont’ed]
Consider the \cnf{} formula $\Psi$ given in Example~\ref{ex:running}, the \dDNNF{} $\Sigma = ((\neg a \wedge b) \vee (a \wedge \neg b)) 
\wedge (c \vee (\neg c \wedge d))$ is equivalent to $\Psi$.
\end{example}

\dDNNF{} serves as a compelling language of representation due to its ability to efficiently handle various queries and transformations, 
such as satisfiability and conditioning in polynomial time. 
Notably, queries involving direct access~\cite{CarmeliTGKR23,BringmannCM22}, uniform sampling~\cite{SharmaGRM18} and model enumeration~\cite{Darwiche02a}, 
which we will discuss in the next section, can be answered efficiently when the formula is represented as a \dDNNF.
Although our experimental methodology is based on compiling CNF formulas into \dDNNF{}, the applicability of our findings regarding suitable preprocessing techniques extends beyond this specific approach. 
Indeed, these results hold for any query-strategy, as preprocessing can be applied prior to query evaluation, and the answers obtained for the simplified formula can subsequently be mapped back to the original CNF in polynomial time.
This generality underscores the practical value of our recommended preprocessing techniques, regardless of the downstream reasoning or enumeration strategy.

\subsection{Model Enumeration, Direct Access and Uniform Sampling Queries}

The enumeration problem involves listing the set of models of a propositional formula without redundancies, commonly referred to as the disjoint AllSAT problem. 
Models can be enumerated in two forms: \emph{complete} or \emph{partial}. 
A \emph{complete model} assigns a value to every propositional variable in the formula. 
However, due to the often vast number of complete models associated with a formula, compact representations are desirable in certain applications.
A \emph{partial model} provides such a compact representation by allowing some variables to remain unassigned. 
For a partial model to be valid, it must ensure that assigning any truth value to the unassigned variables does not alter the satisfiability of the model. 
Consequently, a partial model with $m$ assigned variables represents $2^{n-m}$ complete models, where $n$ is the total number of variables.

As we will demonstrate in the next section, the preprocessing techniques applicable to these two enumeration tasks behave differently and require distinct considerations.

\begin{example}[Example~\ref{ex:running} cont’ed]\label{ex:firstModelEnum}
    For the \cnf{} formula $\Phi$ provided in Example~\ref{ex:running}, the complete models of $\model{\Psi}$ are:  
    $\{a,\neg b,c,d\}$, $\{\neg a, b,c,d\}$, $\{a,\neg b,c, \neg d\}$
    $\{\neg a, b,c, \neg d\}$, $\{a,\neg b,\neg c,d\}$, $\{\neg a, b, \neg c,d\}$.
    
    A possible compact representation of $\Phi$ can be expressed using the following partial models:  
    $\{a,\neg b,c\}$, $\{\neg a, b,c\}$, $\{a,\neg b,\neg c,d\}$, $\{\neg a, b, \neg c,d\}$.
    
\end{example}

The direct access task, first introduced in the database context by~\cite{BaganDGO08}, consists of, given an input $k$ and an order $\prec_{lex}$ over 
the assignments, returning the $k$-th model of a propositional formula $\Phi$ with respect to $\prec_{lex}$ if $k \leq |\model{\Phi}|$ and failing otherwise. 
In the case of propositional logic, $\prec_{lex}$ involves fixing an order $\tau$ on the Boolean variables and then considering each assignment as a 
word constructed from ${\{0,1\}}^{|\var{\Phi}|}$.
We will denote $\prec_{lex}^{\tau}$ when $\prec_{lex}$ depends on the order $\tau$. 
We will use the notation $<_{\tau}$ to specify that a variable $x$ precedes a variable $y$ according to $\tau$, i.e., $x <_{\tau} y$.

\begin{example}[Example~\ref{ex:running} cont’ed]\label{ex:firstDirectAccess}
Given the \cnf{} formula $\Phi$ in Example~\ref{ex:running}, if $\tau = (a,b,c,d)$, then the first 
model of $\Phi$ is $\{\neg a, b, \neg c, d\}$ (corresponding to the word $(0101)$) and the third model is $\{\neg a, b, c, d\}$ (corresponding 
to the word $(0111)$). 
If instead we use the variable ordering $\tau = (d, c, b, a)$, the first model becomes $\{a, \neg b, c, \neg d\}$ (bitstring $(0101)$), and the last model is $\{\neg a, b, c, d\}$ (bitstring $(1110)$).
\end{example}

The direct access query can serve as a building block for many other important tasks, such as counting, enumerating, and sampling without 
repetition~\cite{SharmaGRM18}.
However, answering this query is generally hard (\#P-difficult) for propositional logic. 
Nevertheless, since \dDNNF{} supports both conditioning and model counting in polynomial time, this query can be answered in polynomial 
time.
Specifically, if $|\model{\Phi}| \leq k$, an error is returned. Otherwise, starting with an empty term $\sigma$ and an interval 
$I = [0, |\model{\Phi}|]$, the algorithm iteratively picks the next variable $x$ in order. 
If $\minI{I} + |\model{\Phi\arr{\sigma}\arr{\neg x}}| \geq k$, then $\sigma = \sigma \cup \{\neg x\}$ and $I$ is updated to $[\minI{I}, \minI{I} + |\model{\Phi\arr{\sigma}\arr{\neg x}}|]$. 
Otherwise, $\sigma = \sigma \cup \{x\}$ and $I$ is updated to $[\minI{I} + |\model{\Phi\arr{\sigma}\arr{\neg x}}| + 1, \maxI{I}]$, where $\minI{I}$ is the 
lower endpoint and $\maxI{I}$ is the upper endpoint.

\begin{example}[Example~\ref{ex:firstDirectAccess} cont’ed]
Let us consider the previous example when we seek the third model. The algorithm starts with $\sigma = \emptyset$ and $I = [0, 6]$. 
As we can see, $|\model{\Phi[\neg a]}| = 3$, which means $\neg a$ is added to $\sigma$ and $I$ is updated to $[0, 3]$. 
Next, $b$ is considered, and $|\model{\Phi[\{\neg a,\neg b\}]}|$ is computed to be $0$. 
Since $0 + 0 < 3$, $b$ is added to $\sigma$ and $I$ remains $[0, 3]$. 
Then, $c$ is selected and $|\model{\Phi[\{\neg a,b,\neg c\}]}|$ is computed to be $1$. 
As $0 + 1 < 3$, $c$ is added to $\sigma$ and $I$ becomes $[1, 3]$. 
Finally, $d$ is picked, and since $|\model{\Phi[\{\neg a,b,c,\neg d\}]}| = 1$ and $1 + 1 < 3$, $\sigma$ is ultimately equal to 
$\{\neg a, b, c, d\}$, which is the expected result of $I$.
\end{example}

The concept of uniform sampling involves generating samples $\mathcal{R}_{\Phi}$ from the set of models of $\Phi$ using a generator $\mathcal{G}$ that 
ensures $\forall \mu \in \mathcal{R}_{\Phi}, \pr{\mathcal{G}(\Phi) = \mu} = 1/|\model{\Phi}|$.
A technique for uniform sampling involves these steps: first, count the total number of models, $c$. 
Then, generate $k$ integers uniformly within the range $\{1, \ldots, c\}$ and use direct access to identify the corresponding models.

\begin{example}[Example~\ref{ex:running} cont’ed]
Given the \cnf{} formula $\Phi$ in Example~\ref{ex:running}, we first compute its total number of models, which is $6$, before selecting 2 random models.
Next, we randomly choose a set of two integers from the set $\{1, \ldots, 6\}$; for example, $\{1, 4\}$. 
Finally, if the order considered for the direct access query is the lexicographical order of the variables in $\Phi$, 
then the set of models $\{\{\neg a, b, \neg c, d\}, \{a, \neg b, \neg c, d\}\}$ is returned.
\end{example}

In~\cite{JerrumVV86}, the authors observed a deep relationship between model counting and uniform sampling. 
They showed that given access to an exact model counter, it is possible to design a uniform generator that requires only polynomially many 
queries to the exact model counter. 
Specifically, since \dDNNF{} supports model counting queries in polynomial time, it is evident that this language is well-suited for 
sampling a set of models.
In~\cite{SharmaGRM18}, the authors propose an approach that tags the \dDNNF{} circuit to efficiently compute uniform sampling. 
This approach leverages recent advances in knowledge compilation, which can be harnessed to design a scalable uniform sampler.

It is important to note that the approach proposed in~\cite{SharmaGRM18} does not impose any order on the models, meaning that the uniform sampling generated cannot be controlled by a seed. 
More specifically, the set of models produced depends on the \dDNNF{} formula generated by the compiler, and in general, compilers do not allow for control over the compilation process. 
In the following, we will show that the selection of preprocessing techniques may vary depending on whether or not we require the ability to control the set of models through a seed.

\section{Preprocessing for Direct Access, Uniform Sampling, and Model Enumeration\label{sec:preprocessing}}
Preprocessing a propositional formula transforms it while preserving properties like satisfiability and model count.
This process is beneficial because the problem at hand (e.g., satisfiability) can often be solved more efficiently after 
the input formula has been preprocessed, accounting for the preprocessing time in the overall solving time. 
Various preprocessing techniques are now recognized as valuable for SAT solving, QBF solving, and model counting~\cite{LagniezM17,LagniezLM20,SoosM22,HeuleSB17,BiereJK21}.
These techniques can be categorized based on the type of equivalence they maintain with the input formula: satisfiability, model counting, 
or logical equivalence.

First, let us examine the preprocessing techniques that produce a formula equivalent to the input formula, such as vivification, backbone detection, and occurrence 
elimination~\cite{PietteHS08,LagniezM14}.
The backbone of a \cnf{} formula $\Phi$ is the set of all literals implied by $\Phi$ when $\Phi$ is satisfiable; 
if $\Phi$ is unsatisfiable, the backbone is the empty set. 
The purpose of backbone identification is to explicitly identify the backbone of the input \cnf{} formula $\Phi$ 
and conjoin it to $\Phi$.
Vivification~\cite{PietteHS08} is a preprocessing technique aimed at reducing a given \cnf{} formula $\Phi$ by removing 
some clauses and literals while preserving equivalence. 
Given a clause $\alpha = \ell_1 \vee \ldots \vee \ell_k$ in $\Phi$, two rules are used to determine whether $\alpha$ can be removed 
from $\Phi$ or simply shortened.
On one hand, if for any $j \in \{1, \ldots, k\}$, a Boolean Constraint Propagator (BCP)~\cite{ZhangMMM01} can prove that
$\Phi \setminus \{\alpha\} \models \ell_1 \vee \ldots \vee \ell_j$, then $\alpha$ is entailed by $\Phi \setminus \{\alpha\}$ and 
can be removed from $\Phi$. 
On the other hand, if BCP can prove that $\Phi \setminus \{\alpha\} \models \ell_1 \vee \ldots \vee \ell_j \vee {\sim}\ell_{j+1}$, 
then $\ell_{j+1}$ can be removed from $\alpha$ without affecting equivalence. 
Occurrence elimination considers only the second rule, focusing on the elimination of literals instead of clauses.

\begin{example}[Example~\ref{ex:running} cont'ed]
Let us consider again the \cnf{} formula $\Phi$ given in Example~\ref{ex:running}. 
When we observe the models, we see that the variables can take all possible values, indicating that the backbone of $\Phi$ is empty. 
Upon employing occurrence elimination, the third clause can be simplified to $c \vee d$, 
as $\Phi \setminus \{c \vee d \vee a\} \models c \vee d \vee \neg a$, provable via \bcp.
$\Phi$ is now equal to $\{a \vee b, \neg a \vee \neg b, c \vee d, c \vee d \vee b\}$. 
Now, we can see that the fourth clause can be removed using the vivification rule since $\Phi \setminus \{c \vee d \vee b\}$ 
can infer $c \vee d$ by \bcp.
\end{example}

Preprocessing techniques that ensure equivalence can be applied to a variety of queries, such as direct access, uniform sampling, 
and both complete and partial model enumeration.
However, not all techniques share this versatility.
Prominent SAT-solving preprocessing methods~\cite{LagniezM14,LagniezLM16} are unsuitable for model counting as they 
may alter the number of models in equisatisfiable formulas.
Now, let us explore the effects of preprocessing techniques solely maintaining satisfiability, like blocked clause elimination 
and variable elimination~\cite{JarvisaloBH10,SubbarayanP04}, on the queries under consideration.

The resolution rule asserts that, given two clauses $\alpha_1 = \{\ell, a_1, \ldots, a_n\}$ and 
$\alpha_2 = \{{\sim}\ell, b_1, \ldots, b_m\}$, the resulting clause $\alpha = \{a_1, \ldots, a_n, b_1, \ldots, b_m\}$ is the 
\emph{resolvent} of $\alpha_1$ and $\alpha_2$ on the literal $\ell$.
Variable elimination of $x$ in $\Phi$ is performed by removing from $\Phi$ all clauses containing the variable $x$ 
(either as a positive or negative literal) and adding into $\Phi$ all possible resolvents between the removed clauses. 
It is equivalent to existentially quantify $x$ in $\Phi$.
This rule is applied effectively only when it does not increase the number of clauses in $\Phi$~\cite{SubbarayanP04}.
The simplification technique known as \emph{blocked clause elimination}~\cite{JarvisaloBH10} targets the removal of specific 
clauses, termed \emph{blocked clauses}, from \cnf{} formulas. 
A literal $\ell$ within a clause $\alpha$ is termed a \emph{blocking literal} if it blocks $\alpha$ with respect to $\Phi$. 
This occurs when, for every clause $\alpha'$ in $\Phi$ containing ${\sim}\ell$, the resulting resolvent on $\ell$ is a tautology.
Essentially, a clause is considered blocked if it contains a literal that can effectively block it.
Applying blocked clause elimination to $\Phi$ involves removing every clause containing a blocking literal and repeating the process 
iteratively until no blocked literals remain.

\begin{proposition}
Let $\Phi$ be a \cnf{} formula. Variable elimination and blocked clause elimination cannot be applied when the objective is to answer 
direct access, uniform sampling and model enumeration queries.
\end{proposition}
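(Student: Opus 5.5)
This is a negative statement, so I will prove it by exhibiting small CNF formulas on which each technique changes the answer to every query under consideration. The key observation is that direct access, uniform sampling and model enumeration (complete or partial) all depend on the exact set $\model{\Phi}$, or at least on $|\model{\Phi}|$ and the lexicographic structure of the models. It therefore suffices to show that each technique can produce a formula $\Phi'$ with $|\model{\Phi'}| \neq |\model{\Phi}|$, or with a different model set over the variables that remain. The answers for $\Phi'$ then cannot in general be mapped back to those of $\Phi$ without extra information. First I fix what ``cannot be applied'' means: the answers computed on $\Phi'$ do not coincide with the answers for $\Phi$. Concretely, the $k$-th model differs or fails to exist, the sampling distribution is not uniform over $\model{\Phi}$, and the enumerated set differs.

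For variable elimination I use the fact, stated above, that eliminating $x$ amounts to existentially quantifying it. I take $\Phi = \{x \vee a, \neg x \vee a\}$, or reuse part of $\Psi$ from Example~\ref{ex:running}. Eliminating $x$ yields $\{a\}$, which has one model over $\{a\}$, whereas $\Phi$ has two models over $\{a,x\}$. Direct access with $k=2$ then succeeds on $\Phi$ and fails on $\Phi'$. Sampling on $\Phi'$ cannot reach both models. Enumeration lists one model instead of two. More generally, $\exists x.\Phi$ loses the multiplicity of extensions: a model of $\exists x.\Phi$ may extend to one or two models of $\Phi$. So even after reinserting $x$ arbitrarily, counts are distorted, and lifting each model of $\Phi'$ to a single model of $\Phi$ biases uniform sampling. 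I will state this non-uniformity explicitly, since it is why sampling breaks even though satisfiability is preserved.

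For blocked clause elimination I need a clause whose removal adds models. I take $\Phi = \{a \vee b, \neg a \vee \neg b\}$. The clause $a \vee b$ is blocked by $a$: the only clause containing $\neg a$ is $\neg a \vee \neg b$, and the resolvent $b \vee \neg b$ is a tautology. Removing it gives $\{\neg a \vee \neg b\}$, which has three models instead of two. In particular $\{\neg a, \neg b\}$ becomes a spurious model, and it is the first model under the order $\tau = (a,b)$. So direct access with $k=1$ returns a non-model of $\Phi$. Sampling outputs a non-model with probability $1/3$. Enumeration, complete or partial (e.g.\ the partial model $\{\neg a\}$), lists non-models.

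The only delicate step is arguing that no polynomial post-processing can generically repair these answers. This is the main obstacle, because the proposition is informal. I will handle it by noting two facts. First, the reconstruction used by SAT preprocessors, which flips blocking literals, maps models of $\Phi'$ to models of $\Phi$ but not bijectively: both $\{\neg a,\neg b\}$ and $\{a,\neg b\}$ map to $\{a,\neg b\}$, so uniformity and counts are lost. Second, the ordering and indexing required for direct access are destroyed. These counterexamples suffice to establish the proposition.
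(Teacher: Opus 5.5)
Your strategy, exhibiting small CNFs whose preprocessed form has a different model set, is the same as the paper's. The paper uses the single formula $\{a\vee b\}$, which both variable elimination and blocked clause elimination reduce to $\top$. Your blocked-clause example is sound: $a\vee b$ is blocked by $a$ in $\{a\vee b,\neg a\vee\neg b\}$, and removing it adds the non-model $\{\neg a,\neg b\}$ over the \emph{same} vocabulary. Strictly, BCE as defined iterates to a fixpoint; $\neg a\vee\neg b$ is then blocked too, and the output is $\top$. Your direct-access and enumeration conclusions are unaffected, but the $1/3$ figure and your reconstruction count describe a single elimination step only.

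The weak point is your variable-elimination example. Since $\{x\vee a,\neg x\vee a\}\equiv a$, the variable $x$ is irrelevant in $\Phi$, and $\exists x.\Phi\equiv\Phi$ as functions over $\{a,x\}$. The discrepancy you report (one model versus two, failure at $k=2$) comes only from dropping $x$ from the vocabulary. If you read $\Phi'$ over $\mathit{Var}(\Phi)$ with $x$ left free, its models, count and lexicographic order coincide exactly with those of $\Phi$. The multiplicity effect you invoke is also absent: the unique model of $\Phi'$ has exactly two extensions, so sampling $\Phi'$ and then drawing $x$ uniformly \emph{is} a uniform sampler for $\Phi$. The example therefore does not show the non-uniformity you say is why sampling breaks, and it stops being a counterexample once the original variable set is kept.

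Replace it with a formula whose eliminated variable is genuinely constrained, e.g.\ $\Phi=\{x\vee a\}$, which is the paper's example up to renaming. Eliminating $x$ gives $\top$, which has $2$ models over $\{a\}$ and $4$ over $\{a,x\}$, against $3$ for $\Phi$; moreover $\neg a$ has one extension while $a$ has two. Since $x\vee a$ is vacuously blocked on $x$, this one formula handles both techniques.

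It also settles the step you flag as the main obstacle, better than your appeal to the standard reconstruction, which rules out only one particular repair. The formulas $\{x\vee a\}$ and $\{x\vee a,x\vee\neg a\}$ both reduce to $\top$ under either technique, over the same variables, yet have $3$ and $2$ models respectively. So no procedure that sees only $\Phi'$ and the vocabulary can return correct counts, $k$-th models, uniform samples or enumerations for both inputs. This is the precise content of the paper's remark that the models of $\Phi$ cannot be retrieved from $\Phi'$.
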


\begin{proof}
Consider the \cnf{} formula $\Phi = \{a \vee b\}$ over the variable set $X = \{a, b\}$. 
It is evident that both preprocessing techniques produce the formula $\Phi' = \top$. 
From $\Phi'$ it is impossible to retrieve the models of $\Phi$. 
Thus, these preprocessing techniques are unsuitable for the queries considered.
\end{proof}

These findings echo conclusions drawn from the model counting domain, revealing the inadequacy of these preprocessing methods for addressing such problems. 
This underscores the interrelationship between model counting, direct access, and uniform sampling queries. 
To delve deeper into this relationship, let us investigate preprocessing techniques tailored to the model counting task, 
like eliminating defined variables~\cite{LagniezLM16,LagniezM14}.
Defined variables $\mathcal{O}$ of $\Phi$ are variables whose valuation depends on other variables from $Var(\Phi) \setminus \mathcal{O}$.
In propositional logic, definability can manifest in two equivalent forms: implicit and explicit.
More precisely, the formula $\Phi$ \emph{implicitly defines} the variable $y$ in terms of $X \subset Var(\Phi)$ if and only if 
for assignment $\gamma_X$ over $X$, we have $\gamma_X \wedge \Phi \models y$ or $\gamma_X \wedge \Phi \models \neg y$. 
The formula $\Phi$ \emph{explicitly defines} the variable $y$ in terms of $X$ if and only if there exists a formula 
$\Psi_X \in PROP_X$ such that $\Phi \models (\Psi_X \leftrightarrow y)$. 
In such a case, $\Psi_X$ is called a \emph{definition (or gate) of $y$ on $X$ in $\Phi$}, $y$ is the \emph{output variable} 
of the gate, and $X$ are its \emph{input variables}. 
In~\cite{LagniezLM16}, the authors demonstrate that defined variables can be eliminated while keeping the number of models unchanged.

\begin{example}[Example~\ref{ex:running} cont'ed]\label{ex:elimDef}
Let $\Phi$ the \cnf{} formula given in Example~\ref{ex:running}.
As we can see, $a \leftrightarrow \neg b$, meaning that $\Phi$ defines the variable $a$ in terms of $\{b\}$. 
After eliminating $a$, the resulting formula is $\Phi' = \{c \vee d \vee \neg b, c \vee d \vee b\} \equiv \{c \vee d\}$, defined over the Boolean variables 
$\{b, c, d\}$. 
The list of models of $\Phi'$ is $\{\{\neg b, c, d\}, \{b, c, d\}, \{\neg b, c, \neg d\}, \{b, c, \neg d\}$, $\{\neg b, \neg c, d\}$, $\{b, \neg c, d\}\}$. 
As we can see, the model count of $\Phi'$, which is $6$, is the same as that of $\Phi$.
\end{example}

While preserving the number of models, as stated in Proposition~\ref{prop:notDirectSharpEquiv}, such preprocessing techniques cannot be applied 
directly when the targeted queries are direct access, uniform sampling and model enumeration.

\begin{proposition}\label{prop:notDirectSharpEquiv}
Let $\Phi$ be a \cnf{} formula and $x$ a variable such that $\Phi$ defines $x$ in terms of $Var(\Phi) \setminus \{x\}$. 
The formula obtained after eliminating $x$ from $\Phi$ cannot be used solely to answer either direct access, uniform sampling or model enumeration queries.
\end{proposition}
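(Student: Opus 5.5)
The proof will be by counterexample, in the same style as the previous proposition. It suffices to exhibit one CNF formula $\Phi$ and one variable $x$ that $\Phi$ defines in terms of $Var(\Phi) \setminus \{x\}$, such that the formula $\Phi'$ obtained after eliminating $x$ has the same model count but does not determine the models of $\Phi$. I plan to take the smallest case, $\Phi = \{x \vee y, \neg x \vee \neg y\}$ over $\{x,y\}$. Here $\Phi \models x \leftrightarrow \neg y$, so $x$ is explicitly (hence implicitly) defined by $\{y\}$. Eliminating $x$, by resolution or by existential quantification, yields $\Phi' \equiv \top$ over $\{y\}$. This formula has two models, $\{y\}$ and $\{\neg y\}$, matching $|\model{\Phi}| = 2$, as the preservation result of~\cite{LagniezLM16} guarantees.

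The key step is to show that $\Phi'$ alone carries no information about the definition $x \leftrightarrow \neg y$. The formula $\Phi'' = \{x \vee \neg y, \neg x \vee y\}$ also defines $x$ in terms of $\{y\}$, via $x \leftrightarrow y$, and eliminating $x$ from it also yields $\top$ over $\{y\}$. Yet $\model{\Phi} \cap \model{\Phi''} = \emptyset$. Any procedure that receives only $\Phi'$ must therefore give the same output for $\Phi$ and for $\Phi''$. I will then treat each query in turn.

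For model enumeration, whether complete or partial, the output for $\Phi'$ either lists assignments over $\{y\}$ only, which are not models of $\Phi$ over $\{x,y\}$, or it extends them by a fixed value of $x$, which is wrong for one of $\Phi$ and $\Phi''$. For direct access under the order $\tau = (x,y)$, the first model of $\Phi$ is $\{\neg x, y\}$ and the first model of $\Phi''$ is $\{\neg x, \neg y\}$, but the input $\Phi'$ is the same in both cases. For uniform sampling, any sampler run on $\Phi'$ produces the same distribution for $\Phi$ and for $\Phi''$. That distribution cannot be supported on $\model{\Phi}$ and on $\model{\Phi''}$ at once, since these sets are disjoint.

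I do not expect a real technical obstacle. The one point that needs care is stating what ``used solely'' means: the procedure has access to $\Phi'$ only, and not to the eliminated gate. The indistinguishability argument through $\Phi''$ makes this precise. It also sets up the subsequent positive result, namely that keeping the definitions (the preprocessing information) restores the ability to answer these queries.
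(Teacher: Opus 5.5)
Your proposal is correct and follows the paper's route: exhibit a formula in which $x$ is defined and whose elimination collapses to $\top$. The paper uses the OR-gate $x \leftrightarrow y_1 \vee \cdots \vee y_k$, and you use $x \leftrightarrow \neg y$. Your twin formula $\Phi''$ is $\Phi$ with the polarity of $x$ flipped, and it turns the paper's bare assertion that the models ``cannot be obtained from $\Phi'$ alone'' into an actual indistinguishability argument; since that flip leaves the eliminated formula unchanged and gives a disjoint model set for \emph{any} satisfiable $\Phi$ defining $x$, your argument also covers the universal reading of the statement.
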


\begin{proof}
Consider the \cnf{} formula $\Phi = (\neg x \vee y_1 \vee \ldots \vee y_k) \wedge (\bigwedge_{i=1}^k (x \vee \neg y_i))$, which represents the Boolean 
function $x \leftrightarrow y_1 \vee \ldots \vee y_k$. 
Clearly, $\Phi$ defines $x$ in terms of $\{y_1, \ldots, y_k\}$. 
Eliminating the variable $x$ produces the \cnf{} formula $\Phi' = \top$ over the variables $\{y_1, \ldots, y_k\}$. 
Although $\Phi$ and $\Phi'$ have the same number of models, the models of $\Phi$ cannot be obtained from $\Phi'$ alone.
\end{proof}

Proposition~\ref{prop:notDirectSharpEquiv} demonstrates that, in the general case, it is impossible to answer direct access or uniform sampling queries 
on the preprocessed formula $\Phi'$ to obtain the results for the original formula $\Phi$.
However, as demonstrated in Proposition~\ref{prop:withDefUniSampling}, this preprocessing can be utilized for uniform sampling and complete model enumeration 
queries, provided that the values of the eliminated variables can be determined from the variables in $\Phi'$.
But first, let us introduce the concept of a compatible evaluation function.

\begin{definition}[Compatible Evaluation Function]
Let $\Phi$ be a \cnf{} formula, and let $\mathcal{O} \subseteq Var(\Phi)$ be a set of variables that $\Phi$ defines in terms of 
$\mathcal{I} = Var(\Phi) \setminus \mathcal{O}$. 
$f$ is called a {\em compatible evaluation function}, if given any $o \in \mathcal{O}$ and $\gamma \in {\{0,1\}}^{\mathcal{I}}$, it 
computes the literal associated with $o$ given $\gamma$ regarding $\Phi$
(which is $o$ if $\Phi$ forces $o$ to be true given $\gamma$, and $\neg o$ otherwise). 
\end{definition}

\begin{proposition}\label{prop:withDefUniSampling}
    Let $\Phi$ be a \cnf{} formula, let $\mathcal{O} \subseteq Var(\Phi)$ be a set of variables defined by $\Phi$ in terms of 
    $\mathcal{I} = Var(\Phi) \setminus \mathcal{O}$, and let $f$ be the associated compatible evaluation function.  
    Let $\Phi'$ be the formula obtained by eliminating the variables in $\mathcal{O}$ from $\Phi$.  
    The following results hold:
    \begin{enumerate}
        \item If $\mathcal{R}_{\Phi'}$ is a uniform sample of $\Phi'$, 
        then $\mathcal{R}_{\Phi} = \{\omega \cup \bigcup_{o \in \mathcal{O}} f(o, \omega) \mid \omega \in \mathcal{R}_{\Phi'}\}$ is a 
        uniform sample of $\Phi$;
        \item $\model{\Phi} = \{\omega \cup \bigcup_{o \in \mathcal{O}} f(o, \omega) \mid \omega \in \model{\Phi'}\}$.
    \end{enumerate}
    \end{proposition}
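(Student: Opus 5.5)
The plan is to prove item~2 first and derive item~1 from it, since uniformity only requires a bijection between $\model{\Phi'}$ and $\model{\Phi}$. The key fact is that $\Phi'$, obtained by eliminating $\mathcal{O}$, is equivalent to $\exists \mathcal{O}.\Phi$, i.e.\ it is the projection of $\Phi$ onto $\mathcal{I}$. This holds because variable elimination by resolution (as recalled above) is existential quantification, and the elimination of defined variables in~\cite{LagniezLM16} likewise yields a formula over $\mathcal{I}$ whose models are exactly the restrictions of models of $\Phi$ to $\mathcal{I}$. I would state this explicitly as the property of the elimination used, so that $\omega \in \model{\Phi'}$ iff there exists $\gamma \in \{0,1\}^{\mathcal{O}}$ with $\omega \cup \gamma \in \model{\Phi}$.

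For item~2, define $g(\omega) = \omega \cup \bigcup_{o \in \mathcal{O}} f(o,\omega)$. For $\subseteq$, take $\mu \in \model{\Phi}$ and let $\omega = \mu|_{\mathcal{I}}$. Then $\omega \in \model{\Phi'}$ by projection. Since $\Phi$ defines every $o \in \mathcal{O}$ in terms of $\mathcal{I}$, we have $\omega \wedge \Phi \models \ell_o$ for a unique literal $\ell_o$ on $o$. The literal $\ell_o$ must agree with $\mu$, because $\mu$ satisfies $\omega \wedge \Phi$. By definition of a compatible evaluation function, $f(o,\omega)=\ell_o$, hence $\mu = g(\omega)$.

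For $\supseteq$, take $\omega \in \model{\Phi'}$. By projection there is some extension $\omega\cup\gamma \in \model{\Phi}$. By definability, every $o$ in that extension takes the forced value $f(o,\omega)$, so $\omega\cup\gamma = g(\omega)$ and therefore $g(\omega)\in\model{\Phi}$.

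The main subtlety, and the step to handle carefully, is the case where $\omega \wedge \Phi$ is inconsistent. There "$\Phi$ forces $o$" is vacuous and $f$ would output $o$ or $\neg o$ arbitrarily. This case is harmless only because we restrict to $\omega \in \model{\Phi'}$, which guarantees consistency of $\omega\wedge\Phi$ through the projection property. I would make this restriction explicit.

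For item~1, note that $g$ is injective, since $g(\omega)|_{\mathcal{I}}=\omega$, and by item~2 it is a bijection $\model{\Phi'}\to\model{\Phi}$ with inverse the restriction to $\mathcal{I}$. In particular $|\model{\Phi}|=|\model{\Phi'}|$. If $\mathcal{G}'$ is a uniform generator for $\Phi'$, define $\mathcal{G}=g\circ\mathcal{G}'$. For any $\mu\in\model{\Phi}$,
$\pr{\mathcal{G}(\Phi)=\mu}=\pr{\mathcal{G}'(\Phi')=\mu|_{\mathcal{I}}}=1/|\model{\Phi'}|=1/|\model{\Phi}|$, so $\mathcal{R}_\Phi=g(\mathcal{R}_{\Phi'})$ is a uniform sample of $\Phi$. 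This step is routine once the bijection is established, so the real work lies in the projection property and the consistency caveat above.
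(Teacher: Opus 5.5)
Your proof is correct and has the same skeleton as the paper's: show that the completion map $\omega \mapsto \omega \cup \bigcup_{o\in\mathcal{O}} f(o,\omega)$ is a bijection from $\mathit{Mod}(\Phi')$ onto $\mathit{Mod}(\Phi)$, then read off both items. The only difference is that the paper gets the bijection by combining the fact that this map sends models of $\Phi'$ to models of $\Phi$ with the count equality $|\mathit{Mod}(\Phi)|=|\mathit{Mod}(\Phi')|$ cited from Proposition~1 of \cite{LagniezLM20}; you instead prove both inclusions directly from $\Phi'\equiv\exists\mathcal{O}.\Phi$, which makes the argument self-contained and makes explicit the consistency caveat on $f$ that the paper leaves implicit.
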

    
\begin{proof}
    Proposition~1 in~\cite{LagniezLM20} establishes that $|\model{\Phi}| = |\model{\Phi'}|$. 
    The proof relies on showing that if $\omega \in \model{\Phi'}$, then $\omega \cup \bigcup_{o \in \mathcal{O}} f(o, \omega) \in \model{\Phi}$. 
    This implies that $f$ defines a bijection between $\model{\Phi'}$ and $\model{\Phi}$.
    Given that $f$ is bijective, the results follow directly:
    \begin{enumerate}
        \item Since $\mathcal{R}_{\Phi'}$ is a uniform sample of $\Phi'$, applying $f$ preserves uniformity. Thus, 
        $\mathcal{R}_{\Phi} = \{\omega \cup \bigcup_{o \in \mathcal{O}} f(o, \omega) \mid \omega \in \mathcal{R}_{\Phi'}\}$ is a uniform sample of $\Phi$.
        \item The bijection guarantees that every model of $\Phi'$ is transformed into a unique model of $\Phi$, ensuring 
        $\model{\Phi} = \{\omega \cup \bigcup_{o \in \mathcal{O}} f(o, \omega) \mid \omega \in \model{\Phi'}\}$.
    \end{enumerate}
\end{proof}

Observe that if $f$ can be computed in polynomial time, it facilitates the polynomial-time construction of both uniform samples and the complete models of $\Phi$ from $\Phi'$.  
This is particularly applicable when the eliminated variables are explicitly defined by Boolean functions, such as equivalence gates, AND-gates, OR-gates, or XOR-gates. 

Now, let us consider the scenario where the direct access query is the focus. 
As demonstrated in the following example, merely knowing the function $f$ is insufficient for effectively answering this query.

\begin{example}[Example~\ref{ex:running} cont'ed]
    Let us revisit the \cnf{} formula $\Phi'$ computed in Example~\ref{ex:elimDef}, which is derived from the original \cnf{} formula $\Phi$ introduced 
    in Example~\ref{ex:running}.  
    Considering the natural order, the first model of $\Phi'$, $\{\neg b, \neg c, d\}$, is augmented using the information from $f$, 
    resulting in $\{a, \neg b, \neg c, d\}$ (due to $a \leftrightarrow \neg b$).  
    However, this differs from the first model of $\Phi$, which is $\{\neg a, b, \neg c, d\}$. 
\end{example}

The problem arises because variable valuation can depend on variables that come later in the order \(\tau\) specified by \(\prec_{lex}^{\tau}\) for the direct 
access task. 
To overcome this, we define a \emph{compatible order} as follows:

\begin{definition}[Compatible order]
Let $\Phi$ be a CNF formula, and let $\mathcal{O} \subseteq Var(\Phi)$ be a set of variables defined by $\Phi$ in terms of 
$\mathcal{I} = Var(\Phi) \setminus \mathcal{O}$. 
An order $\tau$ over $\langle \Phi, \mathcal{I}, \mathcal{O} \rangle$ is  a compatible order if and only if $\tau$ is an order over $Var(\Phi)$, 
$(\mathcal{I}, \mathcal{O})$ is a bi-partition over $Var(\Phi)$, and for every $o \in \mathcal{O}$, $x <_{\tau} o$ for all $x \in \mathcal{I}_o$, 
where $\Phi$ defines $o$ in terms of $\mathcal{I}_o$.
\end{definition}

Two strategies can be applied regarding the control the user has regarding the choice of $\tau$.
The first strategy involves adjusting the order to comply with the preprocessing step, meaning all eliminated variables must be placed at the end of $\tau$. 
For example, in the previous case, $\tau = (b, c, d, a)$ is a compatible order.
The second strategy involves constraining the preprocessing method to only eliminate a variable $o$ if $\Phi$ defines $o$ in terms of $\mathcal{I}$, such that for 
all $x \in \mathcal{I}$, $x <_{\tau}o$. 
For example, in the previous case, $\tau = (b, a, c, d)$ is a compatible order.

We can observe that the first strategy allows for the elimination of more variables during the preprocessing step but restricts the operator 
$\prec_{lex}^{\tau}$ by partially fixing it. 
Conversely, the second strategy may reduce the number of eliminated variables, but it grants the user complete freedom in choosing $\prec_{lex}^{\tau}$. 
Regardless of the chosen strategy, Proposition~\ref{prop:directAccessOrdered} demonstrates that it is possible to leverage preprocessing methods that eliminate
defined variables to construct a \cnf{} formula $\Phi'$ from the input \cnf{} formula $\Phi$, 
while still maintaining the ability to reason about $\Phi'$ to answer the direct access query on $\Phi$.

\begin{proposition}\label{prop:directAccessOrdered}
Let $\Phi$ be a CNF formula, let $\mathcal{O} \subseteq Var(\Phi)$ be a set of variables such that $\Phi$ defines $\mathcal{O}$ in terms of 
$\mathcal{I} = Var(\Phi) \setminus \mathcal{O}$, and $f$ the associated compatible evaluation function.
Let $\Phi' = \exists \mathcal{O}.\Phi$ and $\tau$ is a compatible order over $\langle \Phi, \mathcal{I}, \mathcal{O} \rangle$.
If $\omega'$ is the $k$-th model of $\Phi'$ with respect to $\prec_{lex}^{\tau'}$ (where $\tau'$ is the projection of $\tau$ onto $Var(\Phi')$), 
then $\omega = \omega' \cup \bigcup_{o \in \mathcal{O}} f(\mathcal{I}_o, o)$ will be the $k$-th model of $\Phi$.
\end{proposition}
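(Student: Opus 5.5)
The plan is to show that the extension map $g:\omega' \mapsto \omega' \cup \bigcup_{o \in \mathcal{O}} f(\mathcal{I}_o, o)$ is a bijection from $\model{\Phi'}$ onto $\model{\Phi}$ that is strictly increasing from $\prec_{lex}^{\tau'}$ to $\prec_{lex}^{\tau}$. An order-preserving bijection between two finite totally ordered sets sends the $k$-th element to the $k$-th element, which gives the claim immediately.

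\textbf{Bijection.} Since $\Phi' = \exists \mathcal{O}.\Phi$ is exactly the formula obtained by eliminating the variables of $\mathcal{O}$, item~2 of Proposition~\ref{prop:withDefUniSampling} gives $\model{\Phi} = g(\model{\Phi'})$. Injectivity is immediate: $g(\omega')$ restricted to $\mathcal{I}$ is $\omega'$. Because $\mathcal{I}_o \subseteq \mathcal{I}$, the value $f(\mathcal{I}_o, o)$ depends only on the restriction of $\omega'$ to $\mathcal{I}_o$. Definability of $o$ in terms of $\mathcal{I}_o$ also guarantees that the value of $o$ in any model of $\Phi$ is a function of its $\mathcal{I}_o$-part.

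\textbf{Monotonicity (the key step).} Take $\omega'_1 \prec_{lex}^{\tau'} \omega'_2$, both in $\model{\Phi'}$, and let $v$ be the first variable in $\tau'$ on which they differ. Since $\tau'$ is the projection of $\tau$, every $\mathcal{I}$-variable before $v$ in $\tau$ is assigned identically in $\omega_1 = g(\omega'_1)$ and $\omega_2 = g(\omega'_2)$. The remaining point is to show that every $o \in \mathcal{O}$ with $o <_\tau v$ also receives the same value. By compatibility of $\tau$, all $x \in \mathcal{I}_o$ satisfy $x <_\tau o <_\tau v$, so $\omega'_1$ and $\omega'_2$ agree on $\mathcal{I}_o$. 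As $f(\mathcal{I}_o, o)$ depends only on that restriction, both extensions assign $o$ the same literal. Hence the first position in $\tau$ where $\omega_1$ and $\omega_2$ differ is $v$ itself, with the same values as in $\omega'_1, \omega'_2$, so $\omega_1 \prec_{lex}^{\tau} \omega_2$.

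\textbf{Main obstacle.} The delicate point is the step just described: justifying that $f(\mathcal{I}_o, o)$ is a function of the $\mathcal{I}_o$-part alone. This is where the compatible-order hypothesis enters, and it is exactly what fails in the counterexample preceding the definition of compatible order. I would argue it directly from the definition of implicit definability: for any assignment $\gamma$ to $\mathcal{I}_o$, either $\gamma \wedge \Phi \models o$ or $\gamma \wedge \Phi \models \neg o$. Therefore two models of $\Phi$ that agree on $\mathcal{I}_o$ agree on $o$.

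\textbf{Conclusion.} With $g$ an order-preserving bijection $\model{\Phi'} \to \model{\Phi}$, counting gives that the $k$-th model of $\Phi'$ is mapped to the $k$-th model of $\Phi$. When $k$ exceeds the count, failure is preserved as well, since $|\model{\Phi}| = |\model{\Phi'}|$.
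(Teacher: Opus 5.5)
Your proof is correct, and its core step is the same as the paper's. Compatibility of $\tau$ forces every output variable that precedes the first $\mathcal{I}$-variable where two models differ to take the same value, so the lexicographic comparison is decided on the $\mathcal{I}$-part. The packaging differs, though. The paper proves order preservation in both directions: in detail for the projection $\omega \mapsto \omega'$ of models of $\Phi$ (by contradiction), and only as ``straightforward'' for the extension map. It then closes with a counting contradiction split into the cases $j<k$ and $j>k$, re-deriving along the way that distinct models of $\Phi$ have distinct projections. You instead import bijectivity from Proposition~\ref{prop:withDefUniSampling}. You prove strict monotonicity only for the extension map $g$, which is precisely the direction the paper leaves unproved. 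You then finish with the general fact that a strictly increasing bijection between finite chains preserves ranks. This makes the paper's projection direction redundant, since it follows from monotonicity of $g$ plus surjectivity. Your explicit appeal to implicit definability (models of $\Phi$ that agree on $\mathcal{I}_o$ agree on $o$) also fills in what the paper compresses into ``fixed with respect to $f$''. The paper's version is more self-contained about the correspondence between models; yours is shorter and makes clear exactly where the compatible order is used. One small remark: like the paper, you take $\mathcal{I}_o \subseteq \mathcal{I}$ for granted, although the definition of compatible order does not state it. If $\mathcal{I}_o$ were allowed to contain output variables, your agreement argument would still go through by induction along $\tau$, since all of $\mathcal{I}_o$ precedes $o$.
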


\begin{proof}
First, since the preprocessing technique preserves the number of models, if $k > |Mod(\Phi')|$, then $k > |Mod(\Phi)|$. 
Therefore, the answer to the direct access query remains the same whether it is applied to $\Phi'$ or $\Phi$, meaning that it will fail for both 
formulas as expected.
Now let us suppose that $k \leq |Mod(\Phi')|$.

First, let us demonstrate that if $\omega_1, \omega_2 \in Mod(\Phi)$ such that $\omega_1 \prec_{lex}^{\tau} \omega_2$, 
then $\omega_1' \prec_{lex}^{\tau'} \omega_2'$.
As $\Phi' = \exists \mathcal{O}.\Phi$, we directly deduce that $\omega_1'$ and $\omega_2'$ are models of $\Phi'$.
Now, let us argue by contradiction. Suppose $\omega_2' \prec_{lex}^{\tau'} \omega_1'$. 
By the definition of $\prec_{lex}^{\tau'}$, this implies that there exists $\neg x \in \omega_2'$ such that $x \in \omega_1'$, and 
$\forall \ell \in \omega_2'$ with $Var(\ell) <_{\tau'} x$, we have $\ell \in \omega_1'$.
Since $\omega_1'$ and $\omega_2'$ are projections of $\omega_1$ and $\omega_2$ onto $Var(\Phi')$, it follows that $\neg x \in \omega_2$ and $x \in \omega_1$.
Moreover, for all literals $\ell \in \omega_1$ with $Var(\ell) <_{\tau} x$, we have $\ell \in \omega_2$. 
This is because if $Var(\ell) \in \mathcal{I}$, then $\ell$ is trivially in $\omega_2$. 
Otherwise, since $\tau$ is a compatible order, all variables in $\mathcal{O}$ will be fixed with respect to $f$ in both $\omega_1$ and $\omega_2$. 
Since $\neg x \in \omega_2$ and $x \in \omega_1$, this implies $\omega_2 \prec_{lex}^{\tau} \omega_1$, 
contradicting the initial assumption that $\omega_1 \prec_{lex}^{\tau} \omega_2$.

Then, let us demonstrate that if $\omega_1', \omega_2' \in Mod(\Phi')$ such that $\omega_1' \prec_{lex}^{\tau'} \omega_2'$, 
then $\omega_1 \prec_{lex}^{\tau} \omega_2$ with $\omega_1 = \omega_1' \cup \bigcup_{o\in\mathcal{O}} f(o, \omega_1')$ and 
$\omega_2 = \omega_2' \cup \bigcup_{o\in\mathcal{O}} f(o, \omega_2')$. 
The proof is straightforward and follows from the fact that $\tau$ is a compatible order.

To conclude this proof, let us demonstrate that if $\omega'$ is the $k$-th model of $\Phi'$ with respect to $\prec_{lex}^{\tau'}$, 
then $\omega = \omega' \cup \bigcup_{o \in \mathcal{O}} f(o, \omega')$, will be the $k$-th model of $\Phi$. 
We will argue by contradiction and assume that $\omega$ is not the $k$-th model of $\Phi$. 
This means $\omega$ is the $j$-th model of $\Phi$ where $j < k$ or $j > k$.
First, consider the case where $j < k$. 
Let $\Omega' \subseteq Mod(\Phi')$ be the first $k$ models of $\Phi'$. 
By construction, all models in $\Omega'$ are disjoint, and by the definition of definability, each corresponds to exactly one model of $\Phi$. 
Since $\omega_1' \prec_{lex}^{\tau'} \omega_2'$ implies $\omega_1 \prec_{lex}^{\tau} \omega_2$, the $k$-th model of $\Phi'$ cannot be associated 
with the $j$-th model of $\Phi$ where $j < k$, otherwise $|\Omega'| < k$.
Now, consider the case where $j > k$. 
Let $\Omega \subseteq Mod(\Phi)$ be the first $j-1$ models of $\Phi$ with respect to $\tau$. 
Since $\tau$ is a compatible order, particularly $\Phi$ defines $\mathcal{O}$ in terms of $\mathcal{I}$, 
for all $\omega_1, \omega_2 \in \Omega$, the projected models $\omega_1'$ and $\omega_2'$ of $\omega_1$ 
and $\omega_2$ over $Var(\Phi')$ are such that $\omega_1' \neq \omega_2'$. 
Since $\omega_1 \prec_{lex}^{\tau} \omega_2$ implies $\omega_1' \prec_{lex}^{\tau'} \omega_2'$, there exist at least $k$ models $\Omega''$ of $\Phi'$ 
such that $\omega'' \prec_{lex} \omega'$ with $\omega'' \in \Omega''$. 
This contradicts the fact that $\omega'$ is the $k$-th model of $\Phi'$.
\end{proof}

It is important to note that, by leveraging the previous proposition, it is also possible to design a pseudo-uniform sampler that incorporates 
a seed as part of its operation.  
This capability is particularly valuable when determinism is required, as the seed allows the sampling process to be reproducible 
and consistent across different runs.  
Such a pseudo-uniform sampler can be employed in scenarios where repeatability is essential, such as debugging, benchmarking, 
or ensuring fairness in randomized processes.

In conclusion, when enumerating partial models, it is important to note that eliminating defined variables during preprocessing can create an exponential gap between the number of partial models in the original and simplified formulas.
This is demonstrated in the following example, where the preprocessing step significantly reduces the formula but at the cost of losing 
information critical for partial model enumeration.  

\begin{example}
Consider the \cnf{} formula $\Phi$ representing the Boolean function $x \leftrightarrow \bigoplus_{i=1}^n y_i$. 
In this function, the variable $x$ is fully determined by the parity of the variables $\{y_1, \dots, y_n\}$. 
As such, during the preprocessing phase, $x$ can be safely eliminated, resulting in the simplified formula 
$\Phi' = \exists x.\Phi = \top$. 
A compact representation of $\Phi'$ is $\{\top\}$. However, this representation no longer encodes any partial models of the original formula $\Phi$. 
Moreover, it is well known that the parity function $\neg x \oplus \bigoplus_{i=1}^n y_i$ does not admit a compact representation. 
Thus, there is an exponential gap between the number of partial models of $\Phi$ and those of $\Phi'$.
\end{example}

\section{Experimentation\label{sec:experiments}}
To systematically evaluate the impact of the preprocessing methods outlined above on the efficiency of answering direct access, uniform sampling, and model enumeration queries, we conducted a series of experiments focused on runtime performance and scalability. The knowledge compiler \dfour{} (\url{https://github.com/crillab/d4v2}) served as the backbone for the \dDNNF{} compilation process, while preprocessing strategies were inspired and implemented based on features available in the \bpluse{} framework (\url{https://github.com/crillab/b-plus-e})~\cite{LagniezM23}. Our work also introduced new preprocessing techniques that emphasize fixed variable orderings, extending the capabilities of existing tools.

\medskip
\noindent
\textbf{Implementation Pipeline.}
Our experimental pipeline, presented in Figure~\ref{fig:pipeline},  comprises the following stages:
\begin{itemize}
\item \textbf{Preprocessing:} The \bpluse~\cite{LagniezM23} preprocessor is used to apply a suite of simplifications, including variable elimination and redundancy detection. Depending on the experimental setting, \bpluse{} is configured to operate in equivalence-preserving or model count-preserving modes, optionally enforcing a fixed variable ordering.
\item \textbf{Knowledge Compilation:} The simplified CNF formula is compiled into a decision-DNNF using the \dfour~\cite{LagniezM17} compiler. As previously discussed, our results apply to both decision-DNNF and general \dDNNF{} representations.
\item \textbf{Query Answering:} For each compiled circuit, we answer direct access, uniform sampling, and model enumeration queries using the \decdnnf{} tool (\url{https://crates.io/crates/decdnnf_rs})~\cite{LagniezL24}, building on customized enumeration modules.
\end{itemize}

\begin{figure}[t]
\centering
\begin{tikzpicture}[
node distance=7mm and 11mm,
every node/.style={font=\small},
block/.style={draw, rectangle, rounded corners, minimum width=25mm, minimum height=10mm, align=center}
]
% Nodes
\node[block, fill=gray!10]   (cnf)          {Original \ CNF Formula};
\node[block, fill=blue!10, right=of cnf]    (preproc)      {Preprocessing \ (\bpluse)};
\node[block, fill=green!10, below=of preproc] (compile)      {Knowledge Compilation \ (\dfour)};
\node[block, fill=purple!10, left=of compile] (queries)     {Query Answering \ (\decdnnf)};
% Arrows
\draw[->, thick] (cnf) -- (preproc);
\draw[->, thick] (preproc) -- (compile);
\draw[->, thick] (compile) -- (queries);
\end{tikzpicture}
\caption{\label{fig:pipeline}Experimental pipeline for preprocessing, compilation, and query answering.}
\end{figure}

\noindent
\textbf{Preprocessing Strategies.}
We evaluated the following preprocessing configurations in our experiments:
\begin{itemize}
\item \texttt{no}: \dfour{} is run directly on the original formula, without preprocessing.
\item \texttt{equiv}: The formula is preprocessed using vivification, backbone detection, and occurrence elimination.
\item \texttt{\#equiv-explicit}: In addition to the \texttt{equiv} preprocessing, explicitly defined variables are eliminated.
\item \texttt{\#equiv-explicit-ordered}: Builds on \texttt{\#equiv-explicit} but enforces a compatible variable ordering $\tau$; for our experiments, we select the natural order.
%\item \texttt{#equiv-implicit}: The formula is preprocessed with \texttt{equiv} and the elimination of implicitly defined variables before compilation.
\end{itemize}
These strategies extend and customize the core functionalities of \bpluse{} to suit the requirements of our workflow.

\medskip
\noindent
\textbf{Experimental Setup.}
All experiments were conducted on a cluster with dual quad-core Intel Xeon E5-2637 v4 CPUs (3.50~GHz)., 128GiB RAM, and running CentOS 8 (kernel 4.18.0-301.1.el8.x86\_64). 
Hyperthreading was disabled and no cache sharing between cores was permitted. 
Each run was constrained to 3,600 seconds of CPU time and 32GiB of memory. Compilation was performed with \texttt{g++} version 13.2.0. 
We evaluated 1,425 benchmark instances from previous uniform sampling studies\cite{SharmaGRM18}, available at \url{https://github.com/meelgroup/KUS}.
All experimental logs and materials needed for reproducibility are provided at \url{https://zenodo.org/records/15837216}.

\begin{figure}[t]
\centering
\includegraphics[scale=0.55]{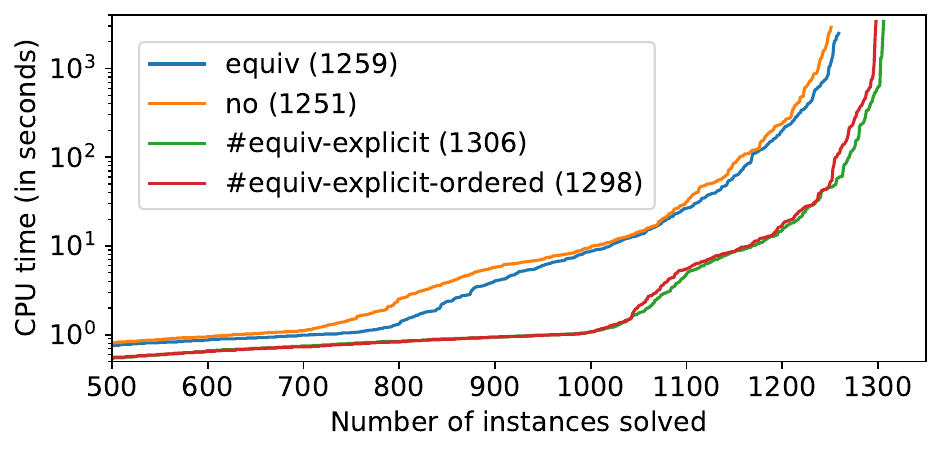}
\caption{\label{fig:times}
Cactus plot showing the running time of \dfour{} with various preprocessing methods. Each line represents a different preprocessing method, with the number of instances solved indicated in parentheses in the legend. The plot displays the number of instances completed within a given CPU time limit, measured in seconds. }
\end{figure}

\medskip
\noindent
\textbf{Results.}
Figure~\ref{fig:times} presents a cactus plot that illustrates the impact of each preprocessing method on the performance of \dfour.
This plot also shows the number of instances solved for each preprocessing method used. 
As observed, the preprocessing method that preserves equivalence, {\tt \#equiv}, is not particularly effective, as it provides only a 
marginal improvement in the performance of \dfour. 
Specifically, it enables the solver to handle only 8 additional instances compared to the version without preprocessing.

\begin{figure*}[t]
    \centering
    \begin{subfigure}[t]{0.42\textwidth}
        \centering
        \includegraphics[width=\textwidth]{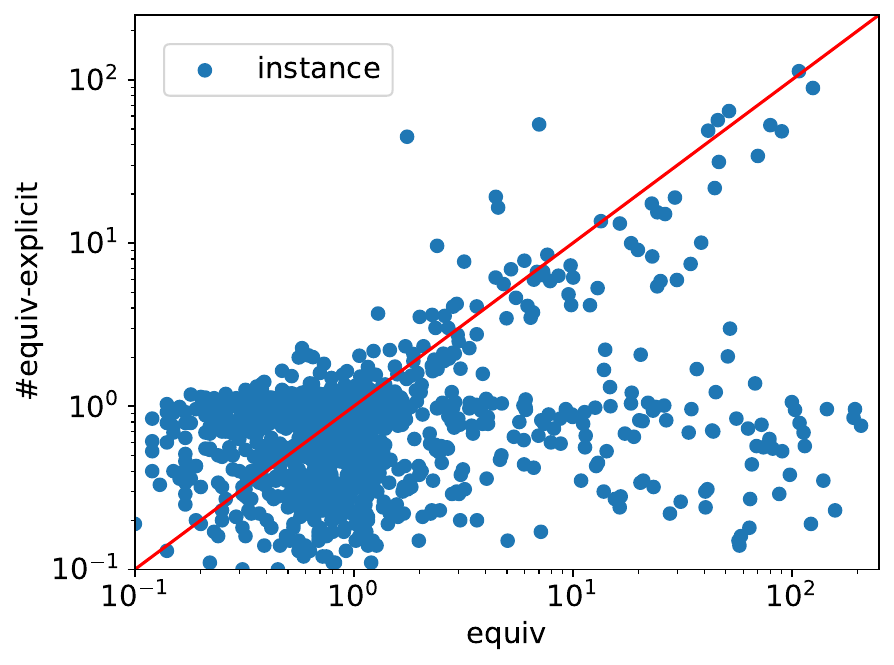}
        \caption{\label{fig:uniform} Uniform Sampling.}         
    \end{subfigure}
    \hfill
    \begin{subfigure}[t]{0.42\textwidth}
        \centering
        \includegraphics[width=\textwidth]{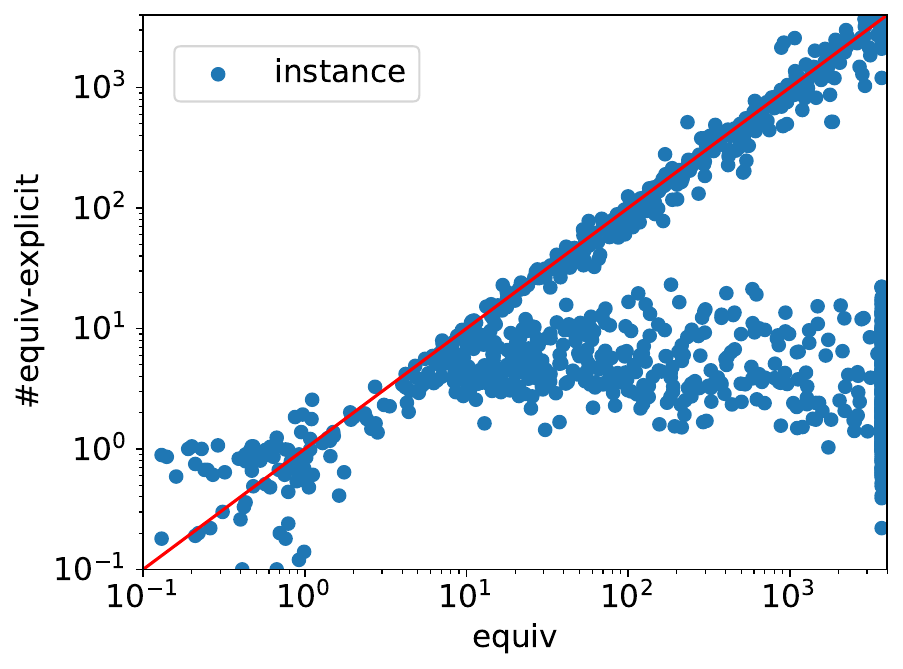}
        \caption{\label{fig:uniformSeed} Uniform Sampling with a Seed.}
    \end{subfigure}
    \hfill
    \begin{subfigure}[t]{0.42\textwidth}
        \centering
        \includegraphics[width=\textwidth]{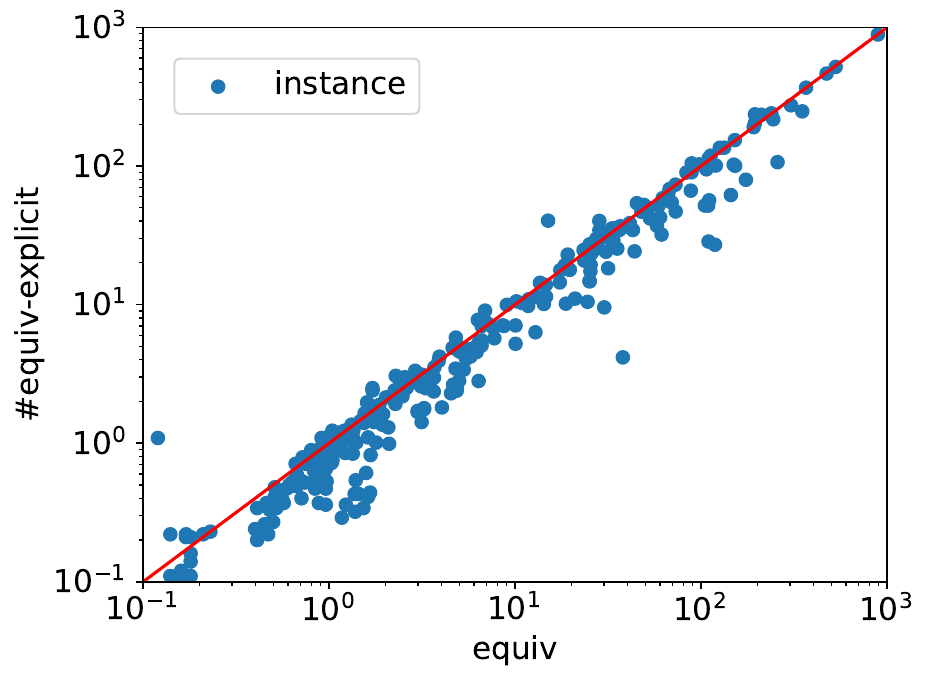}
        \caption{\label{fig:enumeration} Enumeration.}
    \end{subfigure}
    \caption{\label{fig:xp} Comparison of CPU time for the tasks under consideration: uniform sampling, uniform sampling with a seed, and model enumeration. 
    The \dDNNF{} representations analyzed were obtained using the preprocessing methods {\tt equiv} and {\tt \#equiv-explicit} for uniform sampling and enumeration, 
    and {\tt equiv} and {\tt \#equiv-explicit-ordered} for uniform sampling with a seed.}
\end{figure*}

{\tt \#equiv-explicit-ordered} substantially enhances the performance of \dfour, enabling it to handle 47 more 
instances than the compiler without preprocessing.
One of the key advantages of this method is that it supports answering direct access queries.
Next, let us examine the preprocessing method {\tt \#equiv-explicit}. 
Notably, this method is able to solve 8 more instances than {\tt \#equiv-explicit-ordered}, demonstrating its enhanced effectiveness.
Its flexibility lies in the fact that it does not require constraints on the ordering of variable assignments, 
making it particularly suitable for tasks such as model enumeration and uniform sampling.

Now, let us analyze the impact of preprocessing on the efficiency of answering the targeted queries. 
For uniform sampling and model enumeration queries, we focus the \dDNNF{} representations produced after applying the preprocessing methods 
{\tt equiv} and {\tt \#equiv-explicit}.  
To assess the direct access query, we consider uniform sampling with a seed, which leverages the direct access query.
For each instance and each query, a timeout of 3600 seconds and a memory limit of 32 GiB were imposed.  
It is important to note that this runtime does not include the compilation time.

Figure~\ref{fig:xp} displays three scatter plots comparing the running times of various \dDNNF{} representations across the three queries.
Each data point corresponds to an instance, with the x-axis showing the time (in seconds) required to solve it using the \dDNNF{} 
formula obtained after applying the {\tt equiv} preprocessing, and the y-axis showing the time needed when using the 
\dDNNF{} derived from preprocessings that eliminate defined variables.  
The experimental results clearly demonstrate that queries can be answered more efficiently using \dDNNF{} representations generated 
from formulas preprocessed with the {\tt \#equiv-explicit} based methods.  
Furthermore, the figures highlight instances where the {\tt \#equiv-explicit} based preprocessings achieve speeds up to an order of magnitude 
faster than the {\tt equiv} preprocessing, irrespective of the query type.

We evaluate the impact of {\tt \#equiv-explicit} over {\tt equiv} on uniform sampling runtimes (size 10,000).  
Figure~\ref{fig:uniform} illustrates this analysis, which includes 1,255 instances from the 1,425 assessed in the previous experiment.  
Instances excluded are 51 that failed to compile with {\tt equiv}, 4 with {\tt \#equiv-explicit}, and 115 where neither preprocessing method succeeded.  
As shown in the figure, {\tt \#equiv-explicit} generally outperforms {\tt equiv} except for instances solvable in under 2 seconds.  
Most points lie above the diagonal, indicating faster uniform sampling queries when using {\tt \#equiv-explicit}.  
Moreover, memory limits were reached in 15 instances with {\tt equiv}, compared to only 1 instance with {\tt \#equiv-explicit},  
primarily due to the larger size of the \dDNNF{} generated under {\tt equiv}.  

For model enumeration queries, we consider instances with at least $10^4$ models but fewer than $10^9$ models, 
resulting in 367 instances solved by both approaches.  
Figure~\ref{fig:enumeration} highlights the benefits of using {\tt \#equiv-explicit}, which generally provides faster enumeration.  
However, the performance gains are less pronounced than for uniform sampling queries, as enumeration still requires generating all models, 
even when the \dDNNF{} simplifies to $\top$.

To conclude, we examine the deterministic uniform sampling query (size 10,000) results in Figure~\ref{fig:uniformSeed}.
This analysis considers 1,254 out of the 1,425 instances from the previous experiment.  
Instances excluded include 44 that failed to compile with {\tt equiv}, 4 with {\tt \#equiv-explicit-ordered}, and 122 where neither method was successful.  
The \dDNNF{} produced by {\tt \#equiv-explicit-ordered} improves the manageability of uniform sampling with a seed, 
solving more instances and often one order of magnitude faster.  
Furthermore, {\tt \#equiv-explicit-ordered} significantly reduces timeouts, with only 173 compared to 391 for {\tt equiv}.

\iffalse{}
Finally, let us compare the two explicit versions of the preprocessing with the method {\tt \#equiv-implicit}, 
which adopts a different approach by disregarding the computational constraints associated with a compatible evaluation function $f$.  
%
The goal here is to assess the trade-offs between using explicit and implicit definitions, particularly in terms 
of compilation time and the computational cost of query processing.  
%
Unlike the explicit methods, which rely on explicitly defined variables and ensure that $f$ is computable in polynomial time, 
the implicit approach requires evaluating $f$ using an NP oracle, which can be expensive.  
%
Although the implicit method solves more instances, the performance gains are not substantial enough to outweigh the additional 
computational cost involved in invoking the NP oracle.  
%
As shown in the figure, while the {\tt \#equiv-implicit} approach solves more instances than the explicit methods, the difference is 
15 more instances compared to {\tt \#equiv-explicit} and 23 more compared to {\tt \#equiv-explicit-ordered}.  
%
Thus, although the implicit approach offers some advantages, the explicit methods remain more efficient in practice, 
especially given the computational overhead required to use the NP oracle.
\fi

\section{Conclusion and Perspectives\label{sec:conclusion}}
We have investigated the potential of using preprocessing methods to enhance the efficiency of direct access, uniform sampling, and model enumeration queries. 
We show that, except for the preprocessing method that maintains equivalence, current state-of-the-art preprocessing techniques are unsuitable
for these queries as they lead to incorrect results. 
However, we demonstrate that preprocessing techniques preserving the number of models can be effectively employed if information about the eliminated 
variables is retained. 
Our experimental evaluations clearly show that employing such preprocessing methods in practice is advantageous.

As a future direction, we intend to assess the preprocessing methods discussed in this paper on other advanced sampling 
techniques~\cite{SinclairJ89,JordanGJS99,ChakrabortyMV13}. 
Since these approaches do not necessarily depend on model counting, different types of preprocessing methods might be leveraged. 
Another area for improvement involves exploring alternative kinds of explicit definitions. 
In this paper, we concentrated on explicit definitions expressed as equivalences, AND-gates, OR-gates, and XOR-gates, 
but it may be beneficial to explore other types of definitions, such as NOR and NAND gates.

\begin{credits}
\subsubsection{\ackname} We thank the reviewers for their insightful comments and constructive suggestions, which helped improve the quality of this paper. This work has benefited from the support of the AI Chair EXPEKCTATION (ANR-19-CHIA-0005-01) of the French National Research Agency (ANR).
\end{credits}

%% The file named.bst is a bibliography style file for BibTeX 0.99c
\bibliographystyle{splncs04}
\bibliography{biblio}

@inproceedings{Darwiche02a,
  author    = {Adnan Darwiche},
  title     = {A Compiler for Deterministic, Decomposable Negation Normal Form},
  booktitle = {Proceedings of {AAAI}'02},
  pages     = {627--634},
  year      = {2002}
}

@inproceedings{LagniezL24,
  author    = {Jean{-}Marie Lagniez and
               Emmanuel Lonca},
  title     = {Leveraging Decision-DNNF Compilation for Enumerating Disjoint Partial
               Models},
  booktitle = {{KR}'24},
  year      = {2024}
}

@inproceedings{SoosM24,
  author    = {Mate Soos and
               Kuldeep S. Meel},
  title     = {Engineering an Efficient Preprocessor for Model Counting},
  booktitle = {{DAC}'24},
  pages     = {108:1--108:6},
  year      = {2024}
}

@inproceedings{Darwiche23,
  author    = {Adnan Darwiche},
  title     = {Logic for Explainable {AI}},
  booktitle = {{LICS}'23},
  pages     = {1--11},
  year      = {2023}
}

@article{FikesN71,
  author  = {Richard Fikes and
             Nils J. Nilsson},
  title   = {{STRIPS:} {A} New Approach to the Application of Theorem Proving to
             Problem Solving},
  journal = {Artif. Intell.},
  pages   = {189--208},
  year    = {1971}
}

@book{AbiteboulHV95,
  author    = {Serge Abiteboul and
               Richard Hull and
               Victor Vianu},
  title     = {Foundations of Databases},
  publisher = {Addison-Wesley},
  year      = {1995}
}

@inproceedings{ChakrabortyMV13,
  author    = {Supratik Chakraborty and
               Kuldeep S. Meel and
               Moshe Y. Vardi},
  title     = {A Scalable and Nearly Uniform Generator of {SAT} Witnesses},
  booktitle = {Proceedings of {CAV}'13},
  pages     = {608--623},
  year      = {2013}
}

@article{JordanGJS99,
  author  = {Michael I. Jordan and
             Zoubin Ghahramani and
             Tommi S. Jaakkola and
             Lawrence K. Saul},
  title   = {An Introduction to Variational Methods for Graphical Models},
  journal = {Mach. Learn.},
  volume  = {37},
  number  = {2},
  pages   = {183--233},
  year    = {1999}
}

@article{SinclairJ89,
  author  = {Alistair Sinclair and
             Mark Jerrum},
  title   = {Approximate Counting, Uniform Generation and Rapidly Mixing Markov
             Chains},
  journal = {Inf. Comput.},
  volume  = {82},
  number  = {1},
  pages   = {93--133},
  year    = {1989}
}

@inproceedings{LagniezM23,
  author    = {Jean{-}Marie Lagniez and
               Pierre Marquis},
  title     = {Boosting Definability Bipartition Computation Using {SAT} Witnesses},
  booktitle = {Proceedings of {JELIA}'23},
  pages     = {697--711},
  year      = {2023}
}

@article{JerrumVV86,
  author  = {Mark Jerrum and
             Leslie G. Valiant and
             Vijay V. Vazirani},
  title   = {Random Generation of Combinatorial Structures from a Uniform Distribution},
  journal = {Theor. Comput. Sci.},
  volume  = {43},
  pages   = {169--188},
  year    = {1986}
}

@inproceedings{PietteHS08,
  author    = {C{\'{e}}dric Piette and
               Youssef Hamadi and
               Lakhdar Sais},
  title     = {Vivifying Propositional Clausal Formulae},
  booktitle = {Proceedings of {ECAI}'08},
  pages     = {525--529},
  year      = {2008}
}

@inproceedings{FargierM06,
  author    = {H{\'{e}}l{\`{e}}ne Fargier and
               Pierre Marquis},
  title     = {On the Use of Partially Ordered Decision Graphs in Knowledge Compilation
               and Quantified Boolean Formulae},
  booktitle = {Proceedings of {AAAI}'06},
  pages     = {42--47},
  year      = {2006}
}

@article{CarmeliTGKR23,
  author  = {Nofar Carmeli and
             Nikolaos Tziavelis and
             Wolfgang Gatterbauer and
             Benny Kimelfeld and
             Mirek Riedewald},
  title   = {Tractable Orders for Direct Access to Ranked Answers of Conjunctive
             Queries},
  journal = {{ACM} Trans. Database Syst.},
  volume  = {48},
  number  = {1},
  pages   = {1:1--1:45},
  year    = {2023}
}

@article{BaganDGO08,
  author  = {Guillaume Bagan and
             Arnaud Durand and
             Etienne Grandjean and
             Fr{\'{e}}d{\'{e}}ric Olive},
  title   = {Computing the jth solution of a first-order query},
  journal = {{RAIRO} Theor. Informatics Appl.},
  volume  = {42},
  number  = {1},
  pages   = {147--164},
  year    = {2008}
}

@inproceedings{LagniezM17,
  author    = {Jean{-}Marie Lagniez and
               Pierre Marquis},
  title     = {An Improved Decision-DNNF Compiler},
  booktitle = {Proceedings of {IJCAI}'17},
  pages     = {667--673},
  year      = {2017}
}

@inproceedings{MuiseMBH12,
  author    = {Christian J. Muise and
               Sheila A. McIlraith and
               J. Christopher Beck and
               Eric I. Hsu},
  title     = {Dsharp: Fast d-DNNF Compilation with sharpSAT},
  booktitle = {Advances in Artificial Intelligence - 25th Canadian Conference on
               Artificial Intelligence, Canadian {AI} 2012, Toronto, ON, Canada,
               May 28-30, 2012. Proceedings},
  pages     = {356--361},
  year      = {2012}
}

@inproceedings{BringmannCM22,
  author    = {Karl Bringmann and
               Nofar Carmeli and
               Stefan Mengel},
  title     = {Tight Fine-Grained Bounds for Direct Access on Join Queries},
  booktitle = {Proceedings of {PODS}'22},
  pages     = {427--436},
  year      = {2022}
}

@inproceedings{Darwiche04,
  author    = {Adnan Darwiche},
  title     = {New Advances in Compiling {CNF} into Decomposable Negation Normal
               Form},
  booktitle = {Proceedings of {ECAI}'04},
  pages     = {328--332},
  year      = {2004}
}

@article{HeuleSB17,
  author  = {Marijn J. H. Heule and
             Martina Seidl and
             Armin Biere},
  title   = {Solution Validation and Extraction for {QBF} Preprocessing},
  journal = {J. Autom. Reason.},
  volume  = {58},
  number  = {1},
  pages   = {97--125},
  year    = {2017}
}

@incollection{BiereJK21,
  author    = {Armin Biere and
               Matti J{\"{a}}rvisalo and
               Benjamin Kiesl},
  title     = {Preprocessing in {SAT} Solving},
  booktitle = {Handbook of Satisfiability - Second Edition},
  volume    = {336},
  pages     = {391--435},
  publisher = {{IOS} Press},
  year      = {2021}
}

@article{LagniezM17b,
  author  = {Jean{-}Marie Lagniez and
             Pierre Marquis},
  title   = {On Preprocessing Techniques and Their Impact on Propositional Model
             Counting},
  journal = {J. Autom. Reason.},
  volume  = {58},
  number  = {4},
  pages   = {413--481},
  year    = {2017}
}

@inproceedings{SoosM22,
  author    = {Mate Soos and
               Kuldeep S. Meel},
  title     = {Arjun: An Efficient Independent Support Computation Technique and
               its Applications to Counting and Sampling},
  booktitle = {Proceedings of {ICCAD}'22},
  pages     = {71:1--71:9},
  year      = {2022}
}

@inproceedings{SubbarayanP04,
  author    = {Sathiamoorthy Subbarayan and
               Dhiraj K. Pradhan},
  title     = {NiVER: Non Increasing Variable Elimination Resolution for Preprocessing
               {SAT} instances},
  booktitle = {Proceedings of {SAT}'04},
  year      = {2004}
}

@inproceedings{ZhangMMM01,
  author    = {Lintao Zhang and
               Conor F. Madigan and
               Matthew W. Moskewicz and
               Sharad Malik},
  title     = {Efficient Conflict Driven Learning in Boolean Satisfiability Solver},
  booktitle = {Proceedings of {ICCAD}'01},
  pages     = {279--285},
  year      = {2001}
}

@inproceedings{LagniezM14,
  author    = {Jean{-}Marie Lagniez and
               Pierre Marquis},
  title     = {Preprocessing for Propositional Model Counting},
  booktitle = {Proceedings of {AAAI}'14},
  pages     = {2688--2694},
  year      = {2014}
}

@inproceedings{JarvisaloBH10,
  author    = {Matti J{\"{a}}rvisalo and
               Armin Biere and
               Marijn Heule},
  title     = {Blocked Clause Elimination},
  booktitle = {Proceedings of {TACAS}'10},
  volume    = {6015},
  pages     = {129--144},
  year      = {2010}
}

@inproceedings{LagniezLM16,
  author    = {Jean{-}Marie Lagniez and
               Emmanuel Lonca and
               Pierre Marquis},
  title     = {Improving Model Counting by Leveraging Definability},
  booktitle = {Proceedings of {IJCAI}'16},
  pages     = {751--757},
  year      = {2016}
}

@article{LagniezLM20,
  author  = {Jean{-}Marie Lagniez and
             Emmanuel Lonca and
             Pierre Marquis},
  title   = {Definability for model counting},
  journal = {Artif. Intell.},
  volume  = {281},
  pages   = {103229},
  year    = {2020}
}

@inproceedings{KieselE23,
  author    = {Rafael Kiesel and
               Thomas Eiter},
  title     = {Knowledge Compilation and More with SharpSAT-TD},
  booktitle = {Proceedings of {KR}'23},
  pages     = {406--416},
  year      = {2023}
}

@inproceedings{SharmaGRM18,
  author    = {Shubham Sharma and
               Rahul Gupta and
               Subhajit Roy and
               Kuldeep S. Meel},
  title     = {Knowledge Compilation meets Uniform Sampling},
  booktitle = {Proceedings of {LPAR}'22},
  volume    = {57},
  pages     = {620--636},
  year      = {2018}
}

\end{document}